
\documentclass{article}

\usepackage{microtype}
\usepackage{graphicx}
\usepackage{subfigure}
\usepackage{booktabs} 

\usepackage{array,multirow,graphicx}


\usepackage{hyperref}

\usepackage{amsfonts}
\usepackage{amssymb}
\usepackage{amsthm}
\usepackage{amsmath}
\usepackage{multirow}
\usepackage{array}
\usepackage{mathtools}
\usepackage{bm}
\usepackage{graphics}


\makeatletter
\def\Cline#1#2{\@Cline#1#2\@nil}
\def\@Cline#1-#2#3\@nil{%
  \omit
  \@multicnt#1%
  \advance\@multispan\m@ne
  \ifnum\@multicnt=\@ne\@firstofone{&\omit}\fi
  \@multicnt#2%
  \advance\@multicnt-#1%
  \advance\@multispan\@ne
  \leaders\hrule\@height#3\hfill
  \cr}
\makeatother

\usepackage{lastpage}

\theoremstyle{definition}
\newtheorem{definition}{Definition}
\newtheorem{theorem}{Theorem}



\usepackage[accepted]{icml2018}


\begin{document}

\twocolumn[
\icmltitle{Anonymous Walk Embeddings}



\icmlsetsymbol{equal}{*}

\begin{icmlauthorlist}
\icmlauthor{Sergey Ivanov}{skoltech,criteo}
\icmlauthor{Evgeny Burnaev}{skoltech}
\end{icmlauthorlist}

\icmlaffiliation{skoltech}{Skolkovo Institute of Science and Technology, Moscow, Russia}
\icmlaffiliation{criteo}{Criteo Research, Paris, France}

\icmlcorrespondingauthor{Sergey Ivanov}{sergei.ivanov@skolkovotech.ru}

\icmlkeywords{Machine Learning, ICML}

\vskip 0.3in
]



\printAffiliationsAndNotice{}  

\begin{abstract}
The task of representing entire graphs has seen a surge of prominent results, mainly due to learning convolutional neural networks (CNNs) on graph-structured data. While CNNs demonstrate state-of-the-art performance in graph classification task, such methods are supervised and therefore steer away from the original problem of network representation in task-agnostic manner. Here, we coherently propose an approach for embedding entire graphs and show that our feature representations with SVM classifier increase classification accuracy of CNN algorithms and traditional graph kernels. For this we describe a recently discovered graph object, \textit{anonymous walk}, on which we design task-independent algorithms for learning graph representations in explicit and distributed way. Overall, our work represents a new scalable unsupervised learning of state-of-the-art representations of entire graphs. 
\end{abstract}
\section{Introduction}


A wide range of real world applications deal with network analysis and classification tasks. An ease of representing data with graphs makes them very valuable asset in any data mining toolbox; however, the complexity of working with graphs led researchers to seek for new ways of representing and analyzing graphs, of which network embeddings have become broadly popular due to their success in several machine learning areas such as graph classification \cite{survey:embeddings}, visualization \cite{visualizationrelated}, and pattern recognition \cite{imagerelated}.

Essentially, network embeddings are vector representations of graphs that capture  local and global traits and, as a consequence, are more suitable for standard machine learning techniques such as SVM that works on numerical vectors rather than graph structures. Ideally, a practitioner would like to have a \textit{polynomial}-time algorithm that can convert different graphs into different feature vectors. However, such algorithm would be capable of deciding whether two graphs are isomorphic \cite{gartner:hardness}, for which currently only quasipolynomial-time algorithm exists \cite{babai}. Hence, there are fundamental challenges in the design of polynomial-time algorithm for network-to-vector conversion. Instead, a lot of research was devoted to the question of designing network embedding models that are computationally efficient \textit{and} preserve similarity between graphs.



Broadly speaking, network embeddings come from one of the two buckets, either based on engineered graph features or driven by training on graph data. Feature-based methods traditionally appeared in graph kernel setting \cite{rwkernel:10}, where each graph is decomposed into discrete components, distribution of which is used as a vector representation of a graph \cite{rconvolution}. Importantly, general concept of feature-based methods implies ad-hoc knowledge about the data at hand. For example, Random Walk kernel \cite{rwkernel:10} assumes that graph realization originates from the types of random walks a graph has, whereas for Weisfeiler-Lehman (WL) kernel \cite{wlkernel:11} the insight is in subtree patterns of a graph. For high-dimensional graph embeddings feature-based methods produce sparse solution as only few substructures are common across graphs. This is known as \textit{diagonal dominance} \cite{deepgraph}, a situation when a graph representation is only similar to itself, but not to any other graph. 

On the other hand, data-driven approach learns network embeddings by optimizing some form of objective function defined on graph data. Deep Graph Kernels (DGK) \cite{deepgraph}, for example, learns a positive semidefinite matrix that weights the relationship between graph substructures, while Patchy-San (PSCN) \cite{learncnn:16} constructs locally connected neighborhoods for training a convolutional neural network on. Data-driven approach implies learning \textit{distributed} graph representations that have demonstrated promising classification results \cite{learncnn:16, 2dcnn}.


\textbf{Our approach.} We propose to use a natural graph object named \textit{anonymous walk} as a base for learning feature-based and data-driven network embeddings. Recent discovery \cite{anonymouswalks} has shown that anonymous walks provide characteristic graph traits and are capable to reconstruct network proximity of a node \textit{exactly}. In particular, distribution of anonymous walks starting at node $u$ is sufficient for reconstruction of a subgraph induced by all vertices within a fixed distance from $u$; and such distribution uniquely determines underlying Markov processes from $u$, i.e. no two different subgraphs exist having the same distribution of anonymous walks. This implies that two graphs with similar distributions of anonymous walks should be topologically similar. We therefore define feature-based network embeddings on distribution of anonymous walks and show an efficient sampling approach that approximates distributions for large networks.  

To overcome sparsity of feature-based methods, we design a data-driven approach that learns distributed representations on the generated corpus of anonymous walks via backpropagation, in the same vein as neural models in NLP \cite{doc2vec, Bengio:2003}. Considering anonymous walks for the same source node as co-occurring words in the sentence and graph as a collection of such sentences, the hope is that by predicting a target word in a given context of words and a document, the proposed algorithm learns semantic meaning of words and a document. 

To the best of our knowledge, we are the first to introduce anonymous walks in the context of learning network representations and we highlight the following contributions: 



\begin{itemize}
\item Based on the notion of anonymous walk, we propose feature-based network embeddings, for which we describe an efficient sampling procedure to alleviate time complexity of exact computation.
\item By maximizing the likelihood of preserving network proximity of anonymous walks, we propose a scalable algorithm to learn data-driven network embeddings.
\item On widely-used real datasets, we demonstrate that our network embeddings achieve state-of-the-art performance in comparison with other graph kernels and neural networks in graph classification task. 
\end{itemize}

\section{Anonymous Walks}

Random walks are the sequences of nodes, where each new node is selected independently from the set of neighbors of the last node in the sequence. Normally states in a random walk correspond to a label or a global name of a node; however, for reasons described below such states could be unavailable. Yet, recently it has been shown that anonymized version of a random walk can provide a flexible way to reconstruct a network even when global names are absent \cite{anonymouswalks}. We next define a notion of anonymous walk. 

\begin{definition}
	Let $s = (u_1, u_2, \ldots, u_k)$ be an ordered list of elements $u_i \in V$. We define the positional function $pos\text{: } (s, u_i) \mapsto q$ such that for any ordered list $s = (u_1, u_2, \ldots, u_k)$ and an element $u_i \in V$ it returns a list $q = (p_1, p_2, \ldots, p_l)$ of all positions $p_j \in \mathbb{N}$ of $u_i$ occurrences in a list $s$.
\end{definition}

For example, if $s = (a, b, c, b, c)$, then $pos(s, a) = (1)$ as element $a$ appears only on the first position and $pos(s, b) = (2, 4)$. 

\begin{definition}[Anonymous Walk]
	If $w = (v_1, v_2, \ldots, v_k)$ is a random walk, then its corresponding \textit{anonymous walk} is the sequence of integers $a = (f(v_1), f(v_2), \ldots, f(v_k))$, where integer $f(v_i) = \min\limits_{p_j \in pos(w, v_i)}{pos(w, v_i)}$. \newline 
We denote mapping of a random walk $w$ to anonymous walk $a$ by $w \mapsto a$.
\end{definition}

\begin{figure}[h!]
\centering
    \includegraphics[width=1\columnwidth]{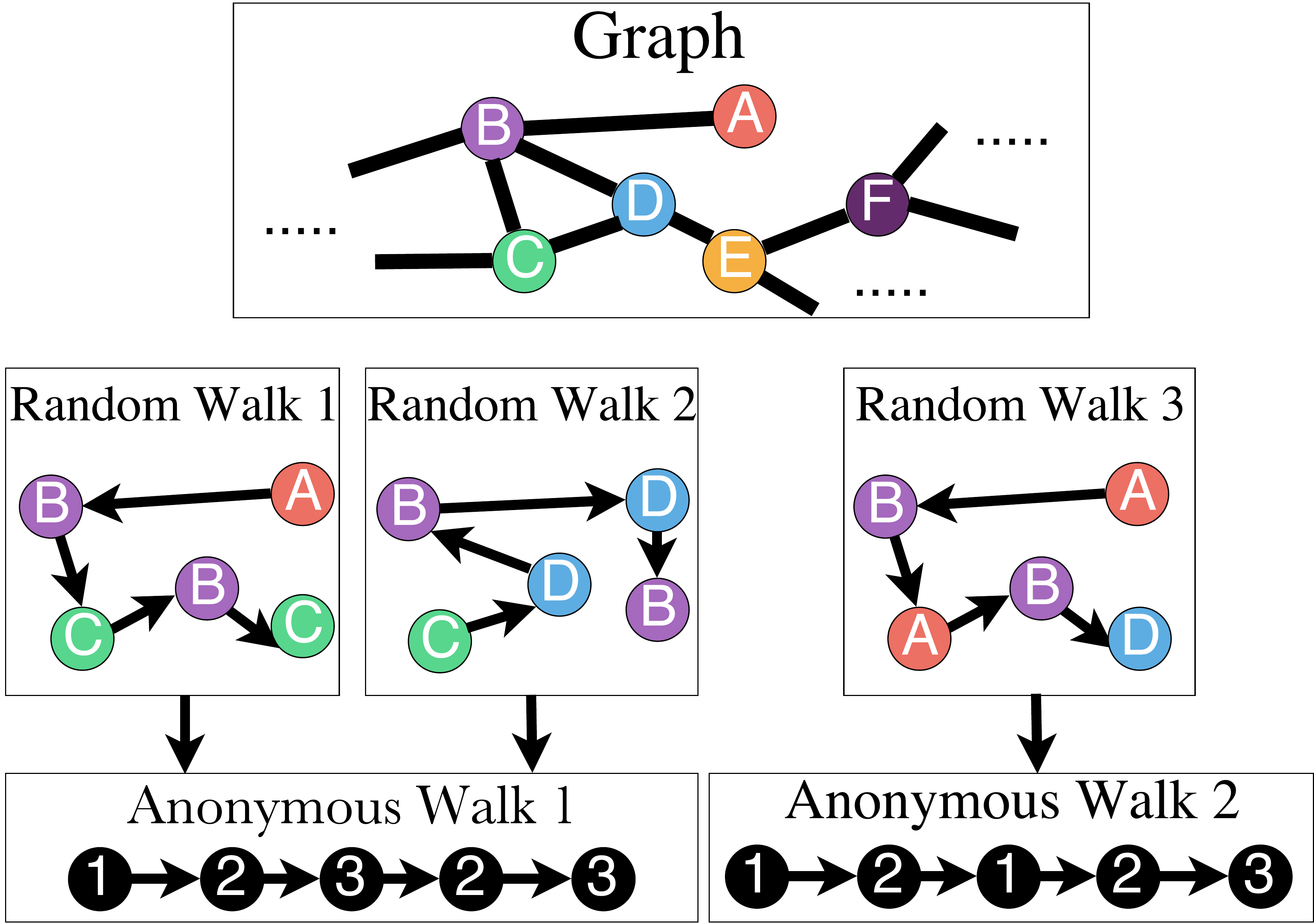}
\caption{An example demonstrating the concept of anonymous walk. Two different random walks 1 and 2 of the graph correspond to the \textit{same} anonymous walk 1. A random walk 3 corresponds to \textit{another} anonymous walk 2.}\label{anonymouswalk}
\end{figure}

For instance, in the graph of Fig. \ref{anonymouswalk} a random walk $a \rightarrow b \rightarrow c \rightarrow b \rightarrow c$ matches anonymous walk $1 \rightarrow 2 \rightarrow 3 \rightarrow 2 \rightarrow 3$. Likewise, another random walk $c \rightarrow d \rightarrow b \rightarrow d \rightarrow b$ also corresponds to anonymous walk $1 \rightarrow 2 \rightarrow 3 \rightarrow 2 \rightarrow 3$. Conversely, another random walk $a \rightarrow b \rightarrow a \rightarrow b \rightarrow d$ corresponds to a different anonymous walk $1 \rightarrow 2 \rightarrow 1 \rightarrow 2 \rightarrow 3$.

Intuitively, states in anonymous walk correspond to the first position of the node in a random walk and their total number equals to the number of distinct nodes in a random walk. Particular name of the state does not matter (so, for example, anonymous walk $1 \rightarrow 2 \rightarrow 3$ would be the same as anonymous walk $3 \rightarrow 1 \rightarrow 2$); however, by agreement, anonymous walks start from $1$ and continue to name new states by incrementing the current “maximum” state in an anonymous walk. 

\textbf{Rationale.} From the perspective of a single node, in the position of an observer, global topology of the network may be hidden deliberately (e.g. social networks often restrict outsiders to examine your friendships) or otherwise (e.g. newly created links in the world wide web may be yet unknown to the search engine). Nevertheless, an observer can, on his own, experiment with the network by starting a random walk from itself, passing the process to its neighbors and recording the observed states in a random walk. As global names of the nodes are not available to an observer, one way to record the states \textit{anonymously} is by describing them by the first occurrence of a node in a random walk. Not only are such records succinct, but it is common to have privacy constraints \cite{privacy} that would not allow to record a full description of nodes. 

Somewhat remarkably, \cite{anonymouswalks} show that for a single node $u$ in a graph $G$, a known distribution $\mathfrak{D}_l$ over anonymous walks of length $l$ is sufficient to reconstruct topology of the ball $B(u, r)$ with the center at $u$ and radius $r$, i.e. the subgraph of graph $G$ induced by all vertices distanced at most $r$ hops from $u$. For the task of learning embeddings, the topology of network is available and thus distribution of anonymous walks $\mathfrak{D}_l$ can be computed precisely. As no two different subgraphs can have the same distribution $\mathfrak{D}_l$, it is useful to generalize distribution of anonymous walks from a single node to the whole network and use it as a feature representation of a graph. This idea paves the way to our feature-based network embeddings.

\section{Algorithms}
We start from discussion of leveraging anonymous walks for learning network embeddings in a feature-based manner. Inspired by empirical results we train an objective function on local neighborhoods of anonymous walks, which further improves results of classification. 
\subsection{AWE: Feature-Based model}

By definition, a weighted directed graph is a tuple $G = (V, E, \Omega)$, where $V = \{v_1, v_2, \ldots, v_n\}$ is a set of $n$ vertices, $E \subseteq V \times V$ is a set of edges, and $\Omega \subset \mathbb{R}$ is a set of edge weights. Given graph $G$ we construct a \textit{random walk graph} $R = (V, E, P)$ such that every edge $e = (u, v)$ has a weight $p_e$ equals to $\omega_e/\sum\limits_{v \in N_{out}(u)}\omega_{(u, v)}$, where $N_{out}(u)$ is the set of out-neighbors of $u$ and $\omega_e \in \Omega$. A random walk $w$ with length $l$ on graph $R$ is a sequence of nodes $u_1, u_2, \ldots, u_{l+1}$, where $u_i \in V$, such that a pair ($u_i, u_{i+1}$) is selected with a probability $p_{(u_i, u_{i+1})}$ in a random walk graph $R$. A probability $p(w)$ of having a random walk $w$ is the total probability of choosing the edges in a random walk, i.e. $p(w) = \prod\limits_{e \in w} p_e$.

According to the Definition \ref{anonymouswalk}, anonymous walk is a random walk, where each state is recorded by its first occurrence index in the random walk. The number of all possible anonymous walks of length $l$ in an arbitrary graph grows exponentially with $l$ (Figure \ref{growth}). Consider an initial node $u$ and a set of all different random walks $W^u_l$ that start from $u$ and have length $l$. These random walks correspond to a set of $\eta$ different anonymous walks $\mathcal{A}_l^u = (a_1^u, a_2^u, \ldots, a_{\eta}^u)$. A probability of seeing anonymous walk $a_i^u$ of length $l$ for a node $u$ is $p(a_i^u) = \sum\limits_{\substack{w \in W^u_l \\ w \mapsto a_i}} p(w)$. Aggregating probabilities across all vertices in a graph and normalizing them by the total number of nodes $N$, we get the probability of choosing anonymous walk $a_i$ in graph $G$: 

\begin{equation*}\label{probability}
p(a_i) = \frac1N\sum\limits_{u \in G} p(a_i^u) =  \frac1N \sum\limits_{u \in G} \sum\limits_{\substack{w \in W^u_l \\ w \mapsto a_i}} p(w).
\end{equation*}

\begin{figure}[h!]
\centering
    \includegraphics[width=1\columnwidth]{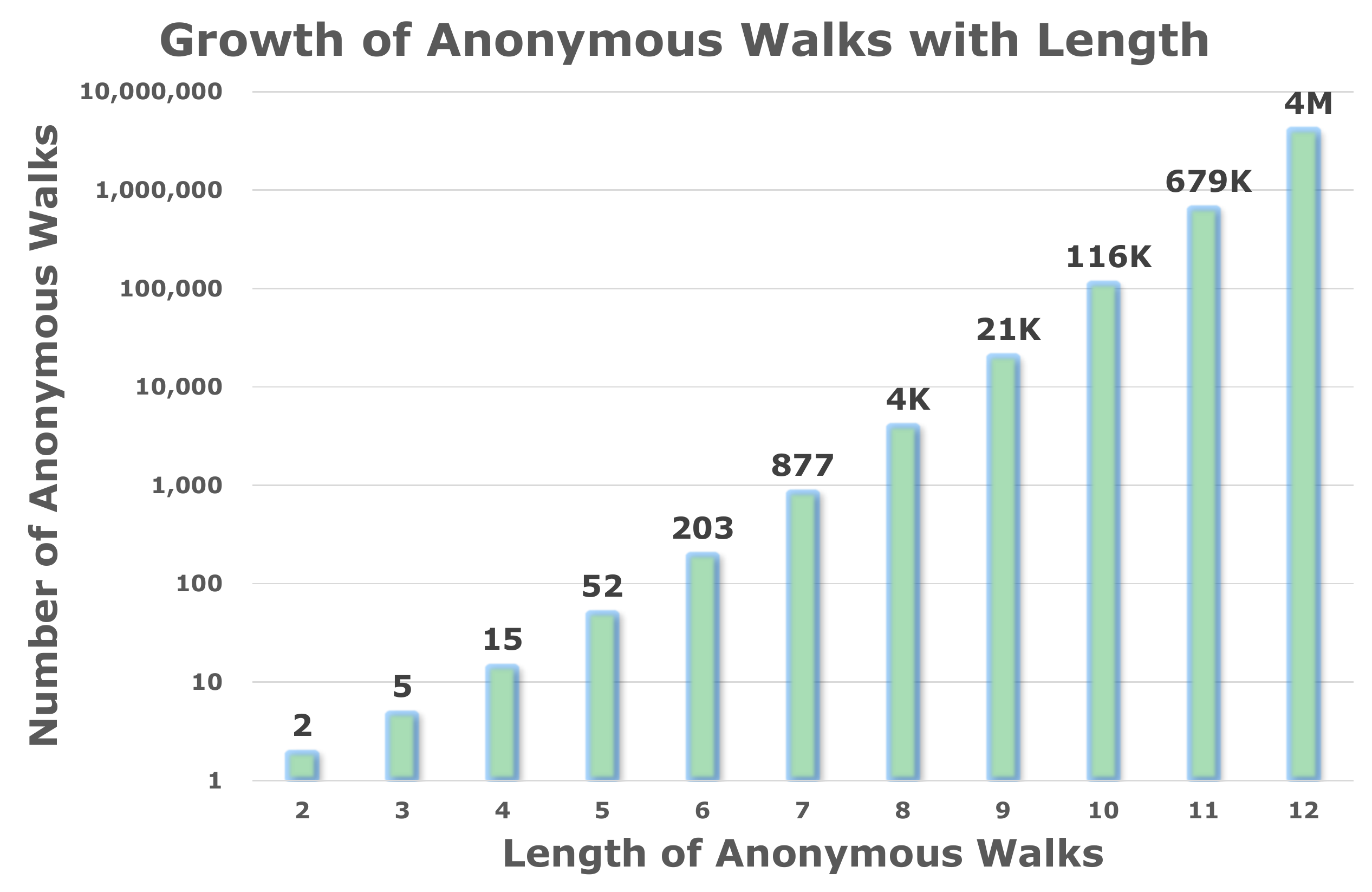}
\caption{$Y$-axis is in log scale. The number of different anonymous walks increases exponentially with length of walks $l$. }\label{growth}
\end{figure}

We are now ready to define network embeddings that we name feature-based anonymous walk embeddings (AWE). 
\begin{definition}[feature-based AWE]
Let $\mathcal{A}_l = (a_1, a_2, \ldots, a_{\eta})$ be the set of all possible anonymous walks of length $l$. \textit{Anonymous walk embedding} of a graph $G$ is the vector $f_G$ of size $\eta$, whose $i$-th component corresponds to a probability $p(a_i)$, of having anonymous walk $a_i$ in a graph $G$:
\begin{equation}\label{awefb}
f_G = (p(a_1), p(a_2), \ldots, p(a_{\eta})).
\end{equation}
\end{definition}

Direct computation of AWE relies on the enumeration of all different random walks in graph $G$, which is shown below to grow exponentially with the number of steps $l$.

\begin{theorem}
	The running time of Anonymous Walk Embeddings (eq. \ref{awefb}) is $\mathcal{O}(nl(d_{in}^{max}(v)\cdot d_{out}^{max}(v))^{l/2})$, where $d_{in/out}^{max}$ is the maximum in/out degree in graph $G$ with $n$ vertices. 
\end{theorem}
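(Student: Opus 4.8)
The plan is to read the cost of evaluating $f_G$ straight off its definition and then halve the exponent with a meet-in-the-middle enumeration. Computing $f_G$ means computing $p(a_i)=\frac1n\sum_{u}\sum_{w\in W^u_l,\, w\mapsto a_i}p(w)$ for all anonymous walks $a_i$; equivalently, sweep once over every length-$l$ random walk $w$ of $G$, compute $p(w)=\prod_{e\in w}p_e$ and the image $a$ with $w\mapsto a$, and add $p(w)$ to an accumulator indexed by $a$ (kept in a hash map, so only walks that actually occur are ever stored, and the resulting sparse vector is normalized by $n$ at the end). Hence the running time is (number of length-$l$ walks of $G$) times the $\mathcal{O}(l)$ work per walk needed to form its anonymous image and its probability.

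First I would record the trivial estimate, which already has the right shape: enumerating the walks out of a fixed source $u$ by depth-first search in the random-walk graph $R$ (whose edge set is that of $G$) explores a tree of depth $l$ with branching at most $d_{out}^{max}$, so at most $(d_{out}^{max})^l$ walks; over all $n$ sources, charging $\mathcal{O}(l)$ per walk, this is $\mathcal{O}\big(n\,l\,(d_{out}^{max})^l\big)$. Only the exponent needs improving.

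To reach exponent $l/2$, split each length-$l$ walk $w=(u_1,\dots,u_{l+1})$ at its middle vertex $m=u_{\lfloor l/2\rfloor+1}$ into a first part $w^-=(u_1,\dots,m)$ of length $\lfloor l/2\rfloor$ and a second part $w^+=(m,\dots,u_{l+1})$ of length $\lceil l/2\rceil$; since the two parts use disjoint edges, $p(w)=p(w^-)\,p(w^+)$. For each candidate middle vertex $m$ (there are $n$), I would (i) enumerate all $w^-$ by a depth-$\lfloor l/2\rfloor$ DFS in the \emph{reverse} graph started at $m$ — at most $(d_{in}^{max})^{\lfloor l/2\rfloor}$ of them; (ii) enumerate all $w^+$ by a forward depth-$\lceil l/2\rceil$ DFS from $m$ — at most $(d_{out}^{max})^{\lceil l/2\rceil}$ of them; and (iii) over the at most $(d_{in}^{max})^{\lfloor l/2\rfloor}(d_{out}^{max})^{\lceil l/2\rceil}$ pairs, glue them, recompute the anonymous walk and the probability of the glued length-$l$ walk in $\mathcal{O}(l)$, and update the accumulator. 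Each length-$l$ walk is produced exactly once (its middle vertex, and its split at that vertex, are unique), and the two DFS enumerations cost $\mathcal{O}\big(l\,(d_{in}^{max})^{l/2}\big)$ and $\mathcal{O}\big(l\,(d_{out}^{max})^{l/2}\big)$, which are dominated by the gluing step. Summing over $m$ gives $\mathcal{O}\big(n\,l\,(d_{in}^{max})^{\lfloor l/2\rfloor}(d_{out}^{max})^{\lceil l/2\rceil}\big)=\mathcal{O}\big(n\,l\,(d_{in}^{max}\cdot d_{out}^{max})^{l/2}\big)$, where for odd $l$ the floor/ceiling rounding only changes the constant (or contributes a harmless $\sqrt{d_{out}^{max}}$, which one can avoid entirely by putting the longer half on whichever of $d_{in}^{max},d_{out}^{max}$ is smaller).

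The only delicate point is step (iii): one must check that gluing is legitimate exactly when the last vertex of $w^-$ and the first vertex of $w^+$ coincide (both equal $m$), that $p$ really factorizes over the halves because their edge multisets are disjoint, and — the easy-to-forget part — that the anonymous image of the glued walk has to be recomputed from scratch rather than concatenated from the images of the halves, since first-occurrence indices shift after gluing; this re-anonymization is still a single $\mathcal{O}(l)$ pass. Everything else (reverse adjacency lists, carrying the partial edge-product down the DFS stack, hashing a length-$l$ integer sequence) is routine and dominated by the $\mathcal{O}(l)$-per-glued-walk term, which multiplies the $n\,(d_{in}^{max}\cdot d_{out}^{max})^{l/2}$ count to give the stated bound.
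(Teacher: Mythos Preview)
Your argument is correct, but it follows a different route from the paper. The paper's proof simply bounds the total number $k_l$ of length-$l$ walks by quoting an inequality of T\"aubig et al.\ (2012),
\[
k_l^{\,2}\;\le\;\Big(\sum_{v} d_{in}^{\,l}(v)\Big)\Big(\sum_{v} d_{out}^{\,l}(v)\Big),
\]
which immediately gives $k_l \le n\,(d_{in}^{max}\cdot d_{out}^{max})^{l/2}$; since each walk is mapped to its anonymous image in $\mathcal{O}(l)$, the theorem follows. You reach the same bound on $k_l$ by a self-contained combinatorial argument: split each walk at its middle vertex and bound the number of prefixes (via in-degrees, walking backwards) and suffixes (via out-degrees, walking forwards) separately. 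Your approach is more elementary and does not depend on an external reference; it also yields an explicit enumeration procedure, and your handling of the odd-$l$ floor/ceiling asymmetry is careful. One remark: the meet-in-the-middle \emph{algorithm} is more elaborate than strictly necessary --- a plain DFS from each source already enumerates exactly the $k_l$ walks in $\mathcal{O}(l\,k_l)$ time, so your middle-vertex decomposition could have been stated purely as a counting bound on $k_l$ without the gluing and re-anonymization machinery.
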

\begin{proof}
	Let $k_l$ be the number of random walks of length $l$ in a directed graph. According to \cite{taubig2012} $k_l$ can be bounded by the powers of in- and out-degrees of nodes in $G$:
    \begin{equation*}
    	k_l^2 \le (\sum_{v\in G} d_{in}^l(v))(\sum_{v\in G} d_{out}^l(v)).
    \end{equation*}
Hence, the number of random walks in a graph is at most $n(d_{in}^{max}(v)\cdot d_{out}^{max}(v))^{l/2}$, where $d_{in/out}^{max}$ is the maximum in/out degree. As it requires $\mathcal{O}(l)$ operations to map one random walk of length $l$ to anonymous walk, the theorem follows.
\end{proof}

\textbf{Sampling.} As complete counting of all anonymous walks in a large graph may be infeasible, we describe a sampling approach to approximate the true distribution. In this fashion, we draw independently a set of $m$ random walks and calculate its corresponding empirical distribution of anonymous walks. To guarantee that empirical and actual distributions are close with a given confidence, we set the number $m$ of random walks sufficiently large.

More formally, let $\mathcal{A}_l = (a_1, a_2, \ldots, a_{\eta})$ be the set of all possible anonymous walks of length $l$. For two discrete probability distributions $P$ and $Q$ on set $\mathcal{A}_l$, define $L_1$ distance as:
\begin{equation*}
\Vert P - Q \Vert_1 = \sum\limits_{a_i \in \mathcal{A}}\vert P(a_i) - Q(a_i)\vert
\end{equation*}

For a graph $G$ let $\mathfrak{D}_l$ be the actual distribution of anonymous walks $\mathcal{A}_l$ of length $l$ and let $X^m = (X_1, X_2, \ldots, X_m$) be i.i.d. random variables drawn from $\mathfrak{D}_l$. The empirical distribution $\mathfrak{D}^m$ of the original distribution $\mathfrak{D}_l$ is defined as: 

\begin{equation*}
\mathfrak{D}^m(i) = \frac1m \sum\limits_{X_j \in X^m}[\![ X_j = a_i]\!],
\end{equation*}
where $[\![x]\!] = 1$ if $x$ is true and 0 otherwise.

Then, for all $\varepsilon > 0$ and $\delta \in [0, 1]$ the number of samples $m$ to satisfy $P\{\Vert \mathfrak{D}^m - \mathfrak{D} \Vert_1 \ge \varepsilon\} \le \delta$ equals to (from \cite{graphlet:09}):

\begin{equation}
	m = \left\lceil \frac2{\varepsilon^2}{(\log(2^{\eta} - 2) - \log(\delta))}\right\rceil.
\label{bound}
\end{equation}

For example, there are $\eta = 877$ possible anonymous walks with length $l = 7$ (Figure \ref{growth}). If we set $\varepsilon = 0.5$ and $\delta = 0.05$, then $m = 4888$. If we decrease $\varepsilon = 0.1$ and $\delta = 0.01$, then the number of samples will increase to 122500. 

As transition probabilities for random walks can be preprocessed, sampling of a node in a random walk of length $l$ can be done in $\mathcal{O}(1)$ via alias method. Hence, the overall running time of sampling approach to compute feature-based anonymous walk embeddings is $\mathcal{O}(ml)$.

Our experimental study shows state-of-the-art classification accuracy of feature-based AWE on real datasets. We continue to design data-driven approach that eliminates the sparsity of feature-based embeddings.  

\subsection{AWE: data-driven model}
Our approach for learning network embeddings is analogous to methods for learning paragraph vectors in a text corpus \cite{doc2vec}. In our case, an anonymous walk is a word, a randomly sampled set of anonymous walks starting from the same node is a set of co-occurring words, and a graph is a document.

\textbf{Neighborhoods of anonymous walks.} To leverage the analogy from NLP, we first need to generate a corpus of co-occurring anonymous walks in a graph $G$. We define a neighborhood between two anonymous walks of length $l$ if they share the same source node. This is similar to other methods such as shortest-paths co-occurrence in DGK \cite{deepgraph} and rooted subgraphs neighborhood in graph2vec \cite{graph2vec:algo}, which proved to be successful in empirical studies. Therefore, we iterate over each vertex $u$ in a graph $G$, sampling $T$ random walks $(w^u_1, w^u_2, \ldots, w^u_T)$ that start at node $u$ and map to a sequence of co-occurred anonymous walks $s^u =(a^u_1, a^u_2, \ldots, a^u_T)$, i.e. $w^u_i \mapsto a^u_i$. A collection of all $s^u$ for all vertices $u \in G$ is a corpus of co-occurred anonymous walks in a graph and is analogous to a collection of sentences in a document. 

\textbf{Training.} In this framework, we learn representation vector $\mathrm{d}$ of a graph and anonymous walks matrix $\mathrm{W}$ (see Figure \ref{awe_model}). Vector $\mathrm{d}$ has $1 \times d_g$ size, where $d_g$ is embedding size of a graph. Matrix $\mathrm{W}$ has $\eta \times d_a$ size, where $\eta$ is the number of all possible anonymous walks of length $l$ and $d_a$ is embedding size of anonymous walk. For convenience, we call $\mathrm{d}$ as a document vector and $\mathrm{W}$ as a word matrix. Each graph corresponds to its vector $\mathrm{d}$ and an anonymous walk corresponds to a row in a matrix $\mathrm{W}$. The model tries to predict a target anonymous walk given co-occurring context anonymous walks and a graph. 

Formally, a sequence of co-occurred anonymous walks $s =(a_1, a_2, \ldots, a_T)$ corresponds to vectors $\mathrm{w}_1, \mathrm{w}_2, \ldots, \mathrm{w}_T$ of matrix $\mathrm{W}$, and a graph $G$ corresponds to vector $\mathrm{d}$. We aim to maximize the average log probability:

\begin{equation}
\frac1T \sum\limits_{t = \Delta}^{T - \Delta} \log p(\mathrm{w}_t \vert \mathrm{w}_{t-\Delta}, \ldots, \mathrm{w}_{t+\Delta}, \mathrm{d}),
\label{objective}
\end{equation}

where $\Delta$ is a window size, i.e. number of context words for each target word. Probability in objective (\ref{objective}) is defined via softmax function:
\begin{equation}
p(\mathrm{w}_t \vert \mathrm{w}_{t-\Delta}, \ldots, \mathrm{w}_{t+\Delta}, \mathrm{d}) = \frac{e^{y(\mathrm{w}_t)}}{\sum\limits_{i=1}^{\eta} e^{y(\mathrm{w}_i)}}
\label{softmax}
\end{equation}

Each $y(\mathrm{w}_t)$ is unnormalized log probability for output word $i$: 
\begin{equation*}
y(\mathrm{w}_t) = b + Uh(\mathrm{w}_{t-\Delta}, \ldots, \mathrm{w}_{t+\Delta}, \mathrm{d})
\end{equation*}
where $b \in \mathbb{R}$ and $U \in \mathbb{R}^{d_a + d_g}$ are softmax parameters. Vector $h$ is constructed by first averaging walk vectors $\mathrm{w}_{t-\Delta}, \ldots, \mathrm{w}_{t+\Delta}$ and then concatenating with a graph vector $\mathrm{d}$. The reason is that since anonymous walks are randomly sampled, we average vectors $\mathrm{w}_{t-\Delta}, \ldots, \mathrm{w}_{t+\Delta}$ to compensate for the lack of knowledge on the order of walks; and at the same time, the graph vector $d$ is shared among multiple (context, target) pairs. 

To avoid computation of the sum in softmax equation (\ref{softmax}), which becomes impractical for large sets of anonymous walks, one can use Hierarchical softmax \cite{negativesampling} or NCE loss functions \cite{nce} to speed up training. In our work, we use sampled softmax \cite{sampledsoftmax} that for each training example picks only a fraction of vocabulary according to a chosen sampling function. One can measure distribution of anonymous walks in a graph via means of definition \ref{awefb} and decide on a corresponding sampling function.

At every step of the model, we sample context and target anonymous walks from a graph and compute the gradient error from prediction of target walk and update vectors of context walks and a graph via gradient backpropagation. When given several networks to embed, one can reuse word matrix $\mathrm{W}$ across graphs, thereby sharing previously learned embeddings of walks.

Summarizing, after initialization of matrix $\mathrm{W}$ for all anonymous walks of length $l$ and a graph vector $\mathrm{d}$,  the model repeats the following two steps for all nodes in a graph: 1) for sampled co-occurred anonymous walks the model calculates a loss (Eq. \ref{objective}) of predicting a target walk (one of the sampled anonymous walks) by considering all context walks and a graph; 2) the model updates the vectors of context walks in matrix $\mathrm{W}$ and graph vector $\mathrm{d}$ via gradient backpropagation. One step of the model is depicted in Figure \ref{awe_model}. After using up all sampled corpus, a learned graph vector $\mathrm{d}$ is called \textit{anonymous walk embedding}.

\begin{figure}[t]
\centering
    \includegraphics[width=1\columnwidth]{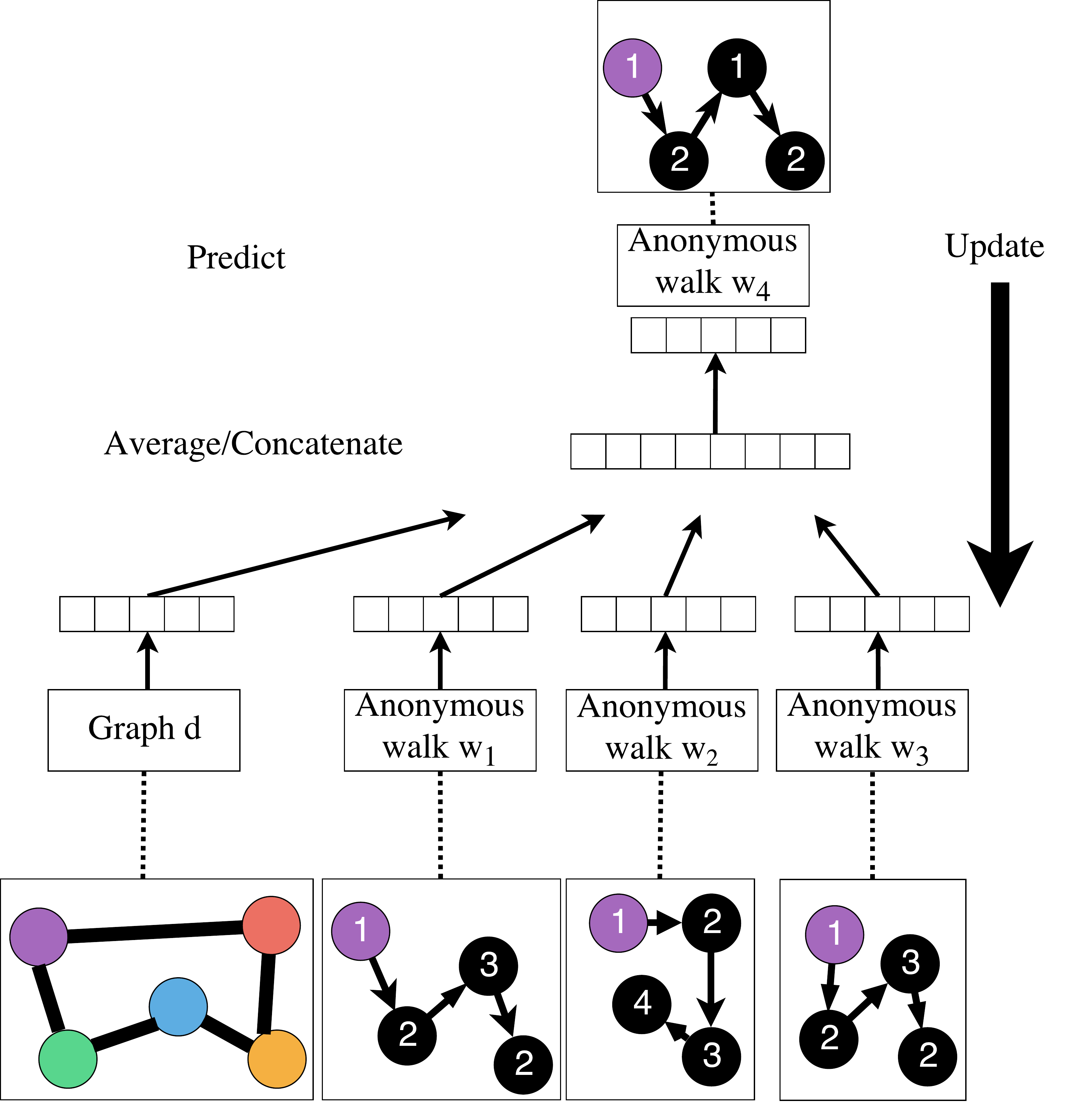}
\caption{A framework for learning data-driven anonymous walk embeddings. Graph is represented by a vector $d$ and anonymous walks are represented by rows of matrix $\mathrm{W}$. All co-occurring anonymous walks start from the same node in a graph. The goal is to predict a target walk $\mathrm{w}_4$ by its surrounding context walks $(\mathrm{w}_1, \mathrm{w}_2, \mathrm{w}_3)$ and a graph vector $\mathrm{d}$. We average embeddings of context walks and then concatenate with a graph vector to predict a target vector. Vectors are updated using stochastic gradient descent on a corpus of sampled anonymous walks.}\label{awe_model}
\end{figure}

\begin{definition}[data-driven AWE]
\textit{Anonymous walk embedding} of a graph $G$ is a vector representation $\mathrm{d}$ learned on a corpus of sampled anonymous walks from a graph $G$. 
\label{awedb}
\end{definition}

So despite the fact that graph and walk vectors are initialized randomly, as an indirect result of predicting a walk in the context of other walks and a graph the model also learns feature representations of networks. Intuitively, a graph vector can be thought as a word with a special meaning: it serves as an overall summary for all anonymous walks in the graph. 

In our experiments, we show how anonymous walk network embeddings can be used in graph classification problem, demonstrating state-of-the-art performance in classification accuracy. 

\section{Graph Classification}
Graph classification is a task to predict a class label of a whole graph and it has found applications in bioinformatics \cite{matching} and malware detection \cite{graph2vec:algo}. In this task, given a series of $N$ graphs $\{G_i\}_{i=1}^N$ and their corresponding labels $\{L_i\}_{i=1}^N$, we are asked to train a model $\textrm{m}$: $G \mapsto L$ that would efficiently classify new graphs. 
Two typical approaches to graph classification problem are (1) supervised learning classification algorithms such as PSCN algorithm \cite{learncnn:16} and (2) graph kernel methods such as WL kernel \cite{wlkernel:11}. As we are  interested in designing task-agnostic network embeddings that do not require labeled data during training, we show how to use anonymous walk embeddings in conjunction with kernel methods to perform classification of new graphs. For this we define a kernel function on two graphs. 

\begin{definition}[Kernel function]
\textit{Kernel function} is a symmetric, positive semidefinite function $k$: $X \times X \mapsto \mathbb{R}^n$ defined for a non-empty set $X$. 
\end{definition}
When $X \subseteq \mathbb{R}^n$, several popular choices of kernel exist \cite{smola:01}:
\begin{itemize}
\item \textbf{Inner product} $k(x, y) = \langle x, y \rangle$, $\forall x, y \in \mathbb{R}^n$,
\item \textbf{Polynomial} $k(x, y) = (\langle x, y \rangle + c)^d$, $\forall x, y \in \mathbb{R}^n$,
\item \textbf{RBF} $k(x, y) = \exp(-\dfrac{\Vert x- y \Vert^2_2}{2\sigma^2})$, $\forall x, y \in \mathbb{R}^n$.
\end{itemize}

With network embeddings, it is then easy to define a kernel function on two graphs: 
\begin{equation}
\label{graphkernel}
K(G_1, G_2) = k(f(G_1), f(G_2)),
\end{equation}
where $f(G_i)$ is an embedding of a graph $G_i$ and $k$: $(x, y) \mapsto \mathbb{R}^n$ is a kernel function.

To train a graph classifier $\mathrm{m}$ one can then construct a square kernel matrix $\mathcal{K}$ for training data $G_1, G_2, \ldots, G_N$ and feed this matrix to a kernelized algorithm such as SVM. Every element of kernel matrix equals to: $\mathcal{K}_{ij} = K(G_i, G_j)$.
For classifying new test instance $G_{\tau}$, one would first compute graph kernels with training instances $(K(G_1, G_{\tau}), K(G_2, G_{\tau}), \ldots, K(G_N, G_{\tau}))$ and provide it to a trained classifier $\textrm{m}$.

In our experiments, we use anonymous walk embeddings to compute kernel matrices and show that kernelized SVM classifier achieves top performance comparing to more complex state-of-the-art models. 
\section{Experiments}
We evaluate our embeddings on the task of graph classification for variety of widely-used datasets.

\textbf{Datasets.} We evaluate performance on two sets of graphs. One set contains \textit{unlabeled } graph data and is related to social networks \cite{deepgraph}. Another set contains graphs with labels on node and/or edges and originates from bioinformatics \cite{wlkernel:11}. Statistics of these ten graph datasets presented in Table \ref{dataset_properties}. 

\textbf{Evaluation.}
We train a multiclass SVM classifier with one-vs-one scheme. We perform a 10-fold cross-validation and for each fold we estimate SVM parameter $C$ from the range [0.001, 0.01, 0.1, 1, 10] using validation set. This process is repeated 10 times and an average accuracy is reported, i.e. the average number of correctly classified test graphs.

\begin{table}[ht]
\caption{Graph datasets used in classification experiments. The columns are: Name of dataset, Number of graphs, Number of classes (maximum number of graphs in a class), Average number of nodes/edges.}
\label{dataset_properties}
\vskip 0.15in
\begin{small}
\resizebox{\columnwidth}{!}{
  \begin{tabular}{| l | p{1cm} | c | p{1.5cm} | p{1cm} | p{1cm} |}
      \hline
      Dataset & Source & Graphs & Classes \newline(Max) & Nodes \newline Avg. & Edges \newline Avg. \\ \hline \hline
      COLLAB & Social & 5000 & 3 (2600) & $74.49$ & $4914.99$ \\ \hline
      IMDB-B & Social & 1000 & 2 (500) & $19.77$ & $193.06$ \\ \hline
      IMDB-M & Social & 1500 & 3 (500) & $13$ & $131.87$ \\ \hline
      RE-B & Social & 2000 & 2 (1000) & $429.61$ & $995.50$ \\ \hline
      RE-M5K & Social & 4999 & 5 (1000) & $508.5$ & $1189.74$ \\ \hline
      RE-M12K & Social & 12000 & 11 (2592) & $391.4$ & $913.78$ \\ \hline
      Enzymes & Bio & 600 & 6 (100) & $32.6$ & $124.3$ \\ \hline
      DD & Bio & 1178 & 2 (691) & $284.31$ & $715.65$ \\ \hline
      Mutag & Bio & 188 & 2 (125) & $17.93$ & $19.79$ \\ \hline
      \hline
  \end{tabular}
}
\end{small}
\vskip -0.1in
\end{table}

\textbf{Competitors.} PSCN is a convolutional neural network algorithm \cite{learncnn:16} with size of receptive field equals to 10. PSCN is the state-of-the-art instance of neural network algorithms, which has achieved strong classification accuracy in many datasets, and we use the best reported accuracy for these algorithms. GK is a graphlet kernel \cite{graphlet:09} and DGK is a deep graphlet kernel \cite{deepgraph} with graphlet size equals to 7. WL is Weisfeiler-Lehman graph kernel algorithm \cite{wlkernel:11} with height of subtree pattern equals to 7. WL proved consistenly strong results comparing to other graph kernels and supervised algorithms. ER is exponential random walk kernel \cite{gartner:hardness} with exponent equals to $0.5$ and kR is $k$-step random walk kernel with $k=3$ \cite{halting:15}.  

\textbf{Setup.}
For feature-based anonymous walk embeddings (Def. \ref{awefb}), we choose length $l$ of walks from the range $[2,3,\ldots, 10]$ and approximate actual distribution of anonymous walks using sampling equation (\ref{bound}) with $\varepsilon = 0.1$ and $\delta = 0.05$. 

For data-driven anonymous walk embeddings (Def. \ref{awedb}), we set length of walks $l=10$ to generate a corpus of co-occurred anonymous walks. We run gradient descent with 100 iterations for 100 epochs with batch size that we vary from the range [100, 500, 1000, 5000, 10000]. Context walks are drawn from a window, which size varies in the range [2, 4, 8, 16]. The embedding size of walks and graphs $d_a$  and $d_g$ equals to 128. Finally, candidate sampling function for softmax equation (\ref{softmax}) chooses uniform or loguniform distribution of sampled classes. 

To perform classification, we compute a kernel matrix, where Inner product, Polynomial, and RBF kernels are tested. For RBF kernel function we choose parameter $\sigma$ from the range $[10^{-5}, 10^{-4}, \ldots, 1, 10]$; for Polynomial function we set $c=0$ and $d=2$. We run the experiments on a machine with Intel(R) Xeon(R) CPU E5-2680 v4 @ 2.40GHz and 32GB RAM\footnote{Code can be found at \url{https://github.com/nd7141/AWE}}. We refer to our algorithms as AWE (DD) and AWE (FB) for data-driven and feature-based approaches correspondingly.

\textbf{Classification results.}
Table \ref{unlabeled_results} presents results on classification accuracy for Social unlabeled datasets. AWE approaches are consistently at the top, sharing top-2 results for all six social datasets, despite being unsupervised approach unlike PSCN. 
At the same time, Table \ref{labeled_results} shows accuracy results for labeled bio datasets. Note that AWE are learned using only topology of the network and not node/edge labels. In this setting, embeddings obtained by AWE (FB) approach achieves competitive performance for the labeled datasets.

\begin{table*}[h]
\caption{Comparison of classification accuracy (mean $\pm$ std., \%) in Social datasets. Top-2 results are in \textbf{bold}. OOM is out-of-memory.}\label{unlabeled_results}
\vskip 0.15in
\begin{center}
\centering
\begin{tabular}{| c| c | c | c | c | c | c | c |}
\hline
& Algorithm & IMDB-M & IMDB-B & COLLAB & RE-B & RE-M5K & RE-M12K \\ \hline \hline
\multirow{3}{*}{DD}
& AWE (DD) & \textbf{51.54 $\pm$ 3.61} & \textbf{74.45 $\pm$ 5.83} & \textbf{73.93 $\pm$ 1.94} & \textbf{87.89 $\pm$ 2.53} & \textbf{50.46 $\pm$ 1.91} & \textbf{39.20 $\pm$ 2.09} \\ \cline{2-8}
& PSCN & 45.23 $\pm$ 2.84 & 71.00 $\pm$ 2.29 & 72.60 $\pm$ 2.15 & \textbf{86.30 $\pm$ 1.58} & 49.10 $\pm$ 0.70 & 41.32 $\pm$ 0.32 \\ \cline{2-8}
& DGK & 44.55 $\pm$ 0.52 & 66.96 $\pm$ 0.56 & 73.09 $\pm$ 0.25 & 78.04 $\pm$ 0.39 & 41.27 $\pm$ 0.18 & 32.22 $\pm$ 0.10 \\ \noalign{\hrule height 2pt}
\multirow{5}{*}{FB}
& AWE (FB) & \textbf{51.58  $\pm$  4.66} & 73.13 $\pm$ 3.28 & 70.99 $\pm$ 1.49 & 82.97 $\pm$ 2.86 & \textbf{54.74 $\pm$ 2.93} & \textbf{41.51 $\pm$ 1.98} \\ \cline{2-8}
& WL & 49.33 $\pm$ 4.75 & \textbf{73.4 $\pm$ 4.63} & \textbf{79.02 $\pm$ 1.77} & 81.1 $\pm$ 1.9 & 49.44 $\pm$ 2.36 & 38.18 $\pm$ 1.3 \\ \cline{2-8}
& GK & 43.89 $\pm$ 0.38 & 65.87 $\pm$ 0.98 & 72.84 $\pm$ 0.28 & 65.87 $\pm$ 0.98 & 41.01 $\pm$ 0.17 & 31.82 $\pm$ 0.08 \\ \cline{2-8}
& ER & OOM & 64.00 $\pm$ 4.93 & OOM & OOM & OOM & OOM \\ \cline{2-8}
& kR & 34.47 $\pm$ 2.42 & 45.8 $\pm$ 3.45 & OOM & OOM & OOM & OOM\\ \hline
\end{tabular}
\end{center}
\vskip -0.1in
\end{table*}

\textbf{Overall observations.} 
\begin{itemize}
\item Tables \ref{unlabeled_results} and \ref{labeled_results} demonstrate that AWE is competitive to supervised state-of-the-art solutions in graph classification task. Importantly, even with simple classifiers such as SVM, AWE increases classification accuracy comparing to other more complex neural network models. Likewise, just comparing graph kernels, we can see that anonymous walks is at the top with tranditional graph objects such as graphlets (GK kernel) or subtree patterns (WL kernel).
\item While feature-based and data-driven approaches are different in nature, the resulted classification accuracy is close across many datasets. As such, only on RE-B dataset data-driven approach has more than 5\% increase in the accuracy. In practice, we found that using feature-based approach for small length $l$ (e.g. $\le 10$) produces competitive results, while data-driven approach works best for large number of iterations and length $l$. 
\item Polynomial and RBF kernel functions bring non-linearity to the classification algorithm and are able to learn more complex classification boundaries. Table \ref{kernel_table} shows that RBF and Polynomial kernels are well suited for feature-based and data-driven models respectively.
\end{itemize}

\begin{table}[h]
\caption{Kernel function comparison in classification task (\%).}\label{kernel_table}
\vskip 0.15in
\begin{center}
\begin{small}
\centering
\begin{tabular}{| c | c | c | c |}
\hline
Algorithm & IMDB-M & COLLAB & RE-B \\ \hline \hline 
AWE (DD) \textit{RBF} & 50.73 & \textbf{73.93} & \textbf{87.89} \\ \hline
AWE (DD) \textit{Inner} & \textbf{51.54} & 73.77 & 84.82 \\ \hline
AWE (DD) \textit{Poly} & 45.32 & 70.45 & 79.35 \\ \noalign{\hrule height 2pt} 

AWE (FB) \textit{RBF} & \textbf{51.58} & \textbf{70.99} & \textbf{82.97} \\ \hline
AWE (FB) \textit{Inner} & 46.45 & 69.60 & 76.83 \\ \hline
AWE (FB) \textit{Poly} & 46.57 & 64.3 & 67.22 \\ \hline
\end{tabular}
\end{small}
\end{center}
\vskip -0.1in
\end{table}

\begin{table}[h]
\caption{Classification accuracy (\%) in labeled Bio datasets.}\label{labeled_results}
\vskip 0.15in
\begin{center}
\begin{small}
\centering
\begin{tabular}{| c | c | c | c | c |}
\hline
Algorithm & Enzymes & DD & Mutag  \\ \hline \hline 
 AWE & 35.77 $\pm$ 5.93 & 71.51 $\pm$ 4.02 & 87.87 $\pm$ 9.76 \\ \hline
 PSCN & $\boldsymbol{-}$ & 77.12 $\pm$ 2.41 & 92.63 $\pm$ 4.21 \\ \hline  
 DGK & 27.08 $\pm$ 0.79 & $\boldsymbol{-}$ & 82.66 $\pm$ 1.45 \\ \hline
 WL & 53.15 $\pm$ 1.14 & 77.95 $\pm$ 0.70 & 80.72 $\pm$ 3.00 \\ \hline
 GK & 32.70 $\pm$ 1.20 & 78.45 $\pm$ 0.26 & 81.58 $\pm$ 2.11 \\ \hline
 ER & 14.97 $\pm$ 0.28 & OOM & 71.89 $\pm$ 0.66 \\ \hline
 kR & 30.01 $\pm$ 1.01 & OOM & 80.05 $\pm$ 1.64 \\ \hline
\end{tabular}
\end{small}
\end{center}
\vskip -0.1in
\end{table}\textbf{}

\textbf{Scalability. }
To test for scalability, we learn network representations using AWE (DD) algorithm for Erdos-Renyi graphs with increasing sizes from [$10$, $10^1$, $10^2$, $10^3$, $10^4$, $3\cdot 10^4$]. For each size we construct 10 Erdos-Renyi graphs with $\mu = np \in [2, 3, 4, 5]$, where $n$ is the number of nodes and $p$ is the probability of having an edge between two arbitrary nodes. In that case, a graph has $m \propto \mu n$ edges. We average time to train AWE (DD) embeddings across 10 graphs for every $n$ and $\mu$. Our setup: size of embeddings equals to 128, batch size equals to 100, window size equals to 100. We run AWE (DD) model for 100 iterations in one epoch. 
In Figure \ref{scalability}, we empirically observe that the model to learn AWE (DD) network representations scales to networks with tens of thousands of nodes and edges and requires no more than a few seconds to map a graph to a vector. 

\begin{figure}[h]
\centering
    \includegraphics[width=1\columnwidth]{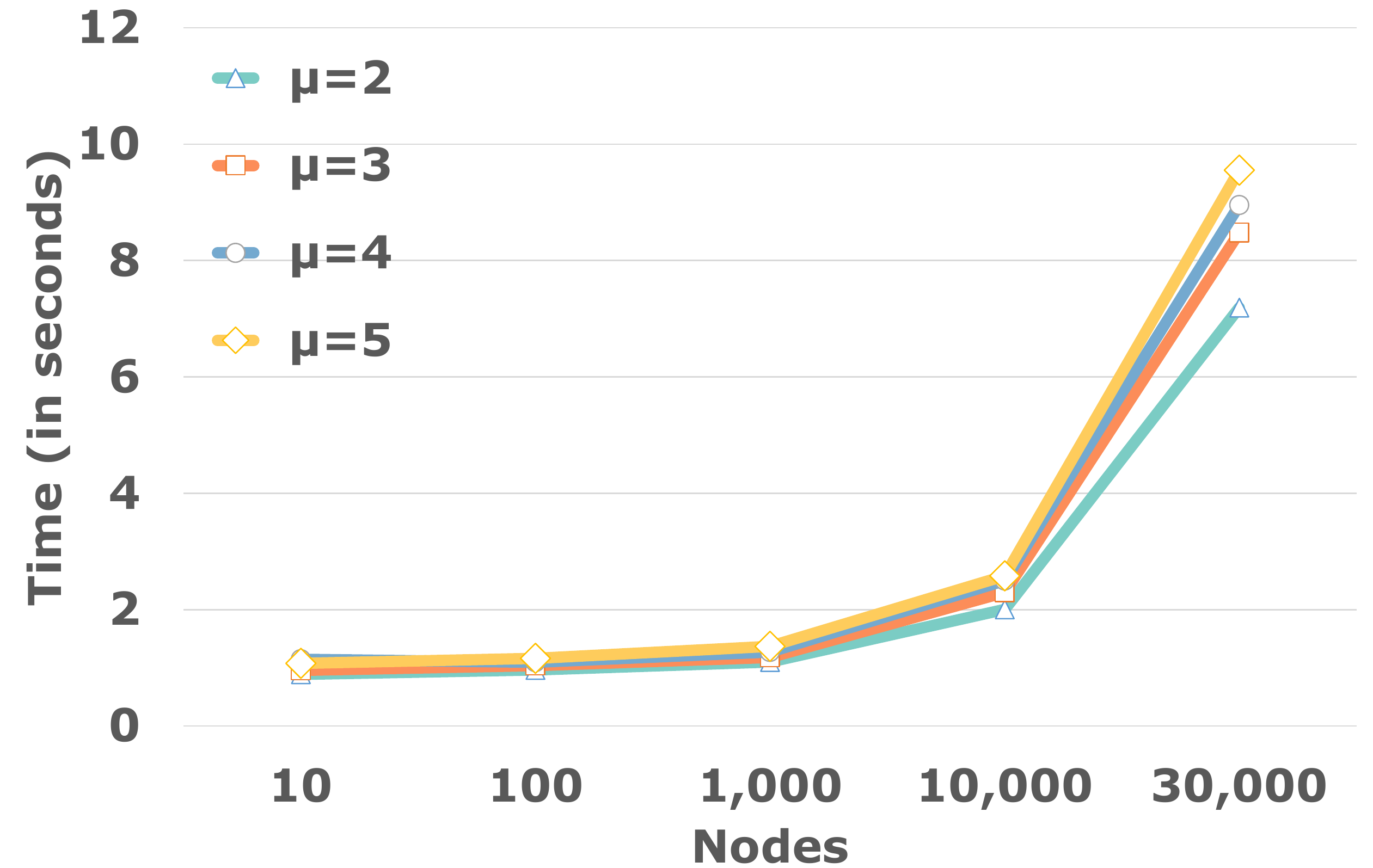}
\caption{Average running time to generate anonymous walk embedding for Erdos-Renyi graphs, with $\mu = np \in [2, 3, 4, 5]$ where $n$ is the number of nodes and $p$ is probability parameter of Erdos-Renyi model. $X$-axis is in log scale.}\label{scalability}
\end{figure}

\textbf{Intuition behind performance.} 
There is a couple of factors that leads anonymous walk embeddings to state-of-the-art performance in graph classification task. First, the use of anonymous walks is backed up by a recent discovery that, under certain condition, distribution of anonymous walks of a single node is sufficient to reconstruct a topology of the ball around a node. Hence, at least on a level of a single node, distribution of anonymous walk serves as a unique representation of subgraphs in a network. Second, data-driven approach reuses hitherto learned embeddings matrix $\mathrm{W}$ in previous iterations for learning embeddings of new graph instances. Therefore one can think of anonymous walks as words that have semantic meaning unified across all graphs. While learning graph embeddings, we simultaneously learn the meaning of different anonymous walks, which provides extra information for our model.


\section{Related Work}
Network representations were first studied in the context of graph kernels \cite{gartner:hardness} and then have become a separate topic that found numerous applications beyond graph classification \cite{survey:embeddings}. Our feature-based embeddings originate from learning distribution on anonymous walks in a graph and is alike to the approach of graph kernels. Embeddings based on graph kernels include Random Walk \cite{gartner:hardness}, Graphlet \cite{graphlet:09}, Weisfeiler-Lehman \cite{wlkernel:11}, Shortest-Path \cite{shortest} decompositions and all can be summarized as an instance of R-convolution framework \cite{rconvolution}. 

Distributed representations have become trendy after significant achievements in NLP applications \cite{word2vec,negativesampling}. Our data-driven network embeddings stem from paragraph-vector distributed-memory model \cite{doc2vec} that has become successful in learning document representations. Other related approaches include Deep Graph Kernel \cite{deepgraph} that learns a matrix for graph kernel that encodes relationship between substructures; PSCN \cite{learncnn:16} and 2D CNN \cite{2dcnn} algorithms that learn convolutional neural networks on graphs; graph2vec \cite{graph2vec:algo} learns network embeddings by extracting rooted subgraphs and training on skipgram negative sampling model \cite{negativesampling}; FGSD \cite{FGSD} that constructs feature vector from the histogram of the multiset of node pairwise distances. \cite{survey:embeddings} provides a more comprehensive list of graph embeddings. Besides this, there is a list of aggregation techniques of node embeddings for the purpose of graph classification \cite{leskovec:representation}.

\section{Conclusion}
We described two unsupervised algorithms to compute network vector representations using anonymous walks. In the first approach, we use distribution of anonymous walks as a network embedding. As the exact calculation of network embeddings can be expensive we demonstrate how one can sample walks in a graph to approximate actual distribution with a given confidence. Next, we show how one can learn distributed graph representations in a data-driven manner, similar to learning paragraph vectors in NLP. 

In our experiments, we show that our network embeddings even with simple SVM classifier achieve increase in classification accuracy comparing to state-of-the-art supervised neural network methods and graph kernels. This demonstrates that representation of your data can be more promising subject to study than the type and architecture of your predictive model. 

Although the focus of this work was in representation of networks, AWE algorithm can be used to learn node, edge, or any subgraph representations by replacing graph vector with a corresponding subgraph vector. In all graph and subgraph representations, we expect data-driven approach to be a strong alternative to feature-based methods. 

\section{Acknowledgement}
This work was supported by the Ministry of Education and Science of the Russian Federation (Grant no. 14.756.31.0001) and by the Skoltech NGP Program No. 1-NGP-1567 “Simulation and Transfer Learning for Deep 3D Geometric Data Analysis” (a Skoltech-MIT joint project).

\bibliography{example_paper}
\bibliographystyle{icml2018}

\end{document}